\newtheorem{theorem}{Theorem}
\newtheorem{prop}{Proposition}
\newtheorem{lemma}{Lemma}
\newcommand{\argmin}{\arg\!\min}
\title{\Large Risk-Aware Distributed Multi-Agent Reinforcement Learning}
	\author{Abdullah Al Maruf$^{1}$, Luyao Niu$^{1}$, Bhaskar Ramasubramanian$^{2}$, Andrew Clark$^{3}$, Radha Poovendran$^{1}$%
		\thanks{$^{1}$Network Security Lab, Department of Electrical and Computer Engineering, 
			University of Washington, Seattle, WA 98195, USA. 
			{\tt\small \{maruf3e, luyaoniu, rp3\}@uw.edu}}
   \thanks{$^{2}$Electrical and Computer Engineering, Western Washington University, Bellingham, WA 98225, USA.
			{\tt ramasub@wwu.edu}}
   \thanks{$^{3}$Electrical and Systems Engineering, Washington University in St. Louis, St. Louis, MO 63130, USA. 
   {\tt\small andrewclark@wustl.edu}
   }
%
	}
\begin{document}	
\maketitle 

\begin{abstract}
Autonomous cyber and cyber-physical systems need to perform decision-making, learning, and control in unknown environments. 
Such decision-making can be sensitive to multiple factors, including modeling errors, changes in costs, and impacts of events in the tails of probability distributions. 
Although multi-agent reinforcement learning (MARL) provides a framework for learning behaviors through repeated interactions with the environment by minimizing an average cost, it will not be adequate to overcome the above challenges. 
In this paper, we develop a distributed MARL approach to solve decision-making problems in unknown environments by learning risk-aware actions. 
We use the conditional value-at-risk (CVaR) to characterize the cost function that is being minimized, and define a Bellman operator to characterize the value function associated to a given state-action pair. 
We prove that this operator satisfies a contraction property, and that it converges to the optimal value function. 
We then propose a distributed MARL algorithm called the \emph{CVaR QD-Learning} algorithm, and establish that value functions of individual agents reaches consensus. 
We identify several challenges that arise in the implementation of the \emph{CVaR QD-Learning} algorithm, and present solutions to overcome these. 
We evaluate the \emph{CVaR QD-Learning} algorithm through simulations, and demonstrate the effect of a risk parameter on value functions at consensus.
\end{abstract}


\section{Introduction}\label{sec:Introduction}

Reasoning about the satisfaction of objectives for complex cyber and cyber physical systems typically involves solving a sequential decision-making problem. 
The operating environment is represented as a Markov decision process (MDP)~\cite{puterman2014markov}, and transitions between any two states in the system is a probabilistic outcome based on the actions of the decision maker or agent. 
In dynamic and uncertain environments, the frameworks of reinforcement learning~\cite{sutton2018reinforcement} and optimal control~\cite{bertsekas2017dynamic} have been used to solve sequential decision-making problems by determining actions to minimize an accumulated cost. 
\emph{Risk-neutral} decision making solutions determine actions by minimizing an expected or average cost; such techniques have been implemented in applications including robotics, mobile networks, and games~\cite{hafner2011reinforcement, mnih2015human, silver2016mastering, zhang2019deep, sadigh2016planning, yan2018data, you2019advanced}. 

Although risk-neutral solutions are computationally tractable, they have been shown to have limitations in characterizing sensitivity to changes in costs, modeling errors, and the effect of tails of probability distributions~\cite{howard1972risk, nilim2005robust, borkar2002q}. 
In order to solve a sequential decision-making problem by learning \emph{risk-aware} actions, we use the \emph{conditional value-at-risk (CVaR)}~\cite{rockafellar2002conditional, ahmadi2021constrained, chapman2021risk, lindemann2020control} to characterize the objective function of an MDP. 
The CVaR corresponds to the average value of the cost conditioned on the event that the cost takes sufficiently large values, and was shown to have
strong theoretical justification for its use in~\cite{rockafellar2002conditional, chow2015risk}.  
Optimizing a CVaR-based cost will ensure sensitivity of actions to rare high-consequence outcomes~\cite{serraino2013conditional}. 
However, different from~\cite{chow2015risk}, we assume that the operating environment of the agent is unknown. 
The agent then learns behaviors by minimizing a cost that is revealed through repeated interactions with the environment. 
For this setting, we develop a CVaR-based variant of the classical Q-learning algorithm~\cite{sutton2018reinforcement}, and establish its convergence. 

When multiple decision makers share the same environment, each agent interacts with both, the environment and with other agents. 
Consequently, the evolution of agents' behaviors has been shown to be non-stationary from the perspective of any single agent~\cite{tan1993multi, matignon2012independent, zhang2021multi}. 
The literature examining incorporation of risk-sensitivity in MARL is limited. 
Recently, the authors of \cite{qiu2021rmix, zhao2022dqmix} developed a framework to learn risk-sensitive policies in cooperative MARL using CVaR. 
The algorithms proposed in the above works use the centralized training with decentralized execution (CTDE) paradigm~\cite{lowe2017multi} to learn behaviors. 
An agent using CTDE can use information about other agents’ observations and actions to aid its own learning during training, but will have to take decisions independently at test-time~\cite{foerster2018counterfactual, rashid2020monotonic}.  

Different from the above works, in this paper, we design a distributed \emph{risk-aware multi-agent reinforcement learning} algorithm. 
Our solution is inspired by QD-learning~\cite{kar2013cal}, wherein at each step, a single update rule incorporates costs revealed by the environment and information from neighboring agents in a graph that describes inter-agent communication. 
We establish the consensus 
of agents' value functions when they are optimizing a CVaR-based cost. 
Our experiments also reveal that as agents become more risk-aware, their value functions at consensus increase in magnitude (corresponding to higher costs incurred); this observation agrees with intuition when the goal is to minimize an accumulated cost. 
We make the following specific contributions: 

\begin{itemize}
    \item We define a Bellman operator to characterize a CVaR-based state-action value function. 
    \item We prove that the Bellman operator is a contraction, and that the fixed point of the operator is the optimal risk-aware value function. 
    \item We develop a risk-aware distributed multi-agent reinforcement learning algorithm called \emph{CVaR QD-Learning} and prove that CVaR-based value functions of individual agents reaches consensus. 
    \item We carry out experimental evaluations to validate the \emph{CVaR QD-Learning} algorithm, and show that value functions at consensus increase in magnitude as agents become more risk-aware. 
\end{itemize}

The remainder of this paper is organized as follows: 
Sec. \ref{sec:Preliminaries} introduces necessary preliminaries on reinforcement learning and risk-aware decision making. 
Sec. \ref{sec:Results1} presents construction of a Bellman operator, and shows that it is a contraction. 
Sec. \ref{sec:Results2} presents CVaR-based QD-learning and associated analytical results, and Sec. \ref{CVaRAlgo} presents the \emph{CVaR QD-Learning} algorithm and describes how challenges in the implementation of the algorithm are overcome. 
Sec. \ref{sec:Experiments} shows results of experimental evaluations and Sec. \ref{sec:Conclusion} concludes the paper. 
\section{Setup and Problem Formulation} \label{sec:Preliminaries}
This section introduces the Markov game setup that we consider, provides necessary preliminaries on reinforcement learning and risk criteria used in decision-making. We then formally state the problem that we will solve in this paper. 

\subsection{Setup}
We consider a system with $N$ agents. Inter-agent communication is described by an undirected graph $\mathcal{G}=(\mathcal{V}, \mathcal{E})$, where $\mathcal{V}=\{1, \cdots, N\}$ is the set of vertices (or nodes)  and $\mathcal{E} \subset \mathcal{V} \times \mathcal{V}$ is the set of edges between pairs of vertices. Here the nodes and the edges in the graph $\mathcal{G}$ correspond to the agents and the communication link between agents. We assume that $\mathcal{G}$ is simple (no self-loops or multiple edges between any two vertices) and connected (there is a path between every pair of nodes). The set of neighbors of agent $n$ is denoted by $\mathcal{N}(n)$. 
The graph can be described by an $N\times N$ Laplacian matrix $L$ with entries $L_{ij}=-1$ if $(i,j) \in \mathcal{E}$ or otherwise zero, and $L_{ii}=|\mathcal{N}(i)|$ which is equal to the degree of node $i$. 
Since $\mathcal{G}$ is connected, the eigenvalues of $L$ can be ordered as $0=\lambda_1(L) < \lambda_2(L) \leq \cdots \leq \lambda_N(L)$ \cite{chung1997spectral}.

The multi-agent setup we consider here is similar to that of \cite{kar2013cal}. We assume that each agent can fully observe the state of the system.  
However, the cost/reward received by each agent is local and may not be available to a remotely located controller. 
As an example, this multi-agent setup can resemble spatially distributed temperature sensors (agents) in a 
building~\cite{kar2013cal}. A remote controller will have access to all sensor readings but will not be aware of the desired temperatures at different rooms in the building. 
A possible objective of the controller could be to minimize the average squared deviation between measured temperatures from sensors and their corresponding locations' desired temperatures. 

In such a setting, behaviors of agents in the environment can be described by a Markov game $M := (\mathcal{S}, \mathcal{A}, \{c_1, \cdots c_N\},P, \gamma)$, where $\mathcal{S}$ and $\mathcal{A}$ are assumed to be finite state and action spaces. When the system is in state $s$ and the action taken by the controller is $a$, the agent $n$ incurs a bounded and deterministic local cost $c_n(s,a) \in [-C_{max},C_{max}]$. We emphasize that in our setup the state $s \in \mathcal{S}$ and $a \in \mathcal{A}$ are global (i.e. common to all agents) whereas individual costs $c_n(s,a)$ are local to each agent. $P(s'|s,a)$ gives the probability of transitioning to state $s' \in \mathcal{S}$ when taking action $a \in \mathcal{A}$ in state $s \in \mathcal{S}$, and $\gamma \in [0,1)$ is the discounting factor. However, different from \cite{kar2013cal}, we assume that costs are deterministic to aid the development of our theoretical results.

A trajectory of $M$ is an alternating sequence of states and actions $(s_0,a_0,s_1,a_1,\dots)$. A history up to time $k$, denoted as $h_k 
\in H_k$, corresponds to a trajectory up to time $k$ i.e.  $(s_0,a_0,s_1,a_1,\dots, s_k)$. Formally, with $H_0=S$, we recursively define the set of possible histories up to time $k \geq 1$ as $H_k=H_{k-1} \times \mathcal{A} \times \mathcal{S}$. 
A policy at time $k$ is a map $\Pi_{H,k}: H_k \rightarrow A$, and we define $\Pi_H := \lim_{k\rightarrow \infty} \Pi_{H,k}$ to be the set of all history-dependent policies. A policy $\mu(s_k)$ is called Markov when it only depends on the current state $s_k$.

Let $c(s_k,a_k)= \frac{1}{N} \sum_{n=1}^N c_n(s_k,a_k)$ be the average costs over all agents observed at time $k$. 
The discounted average cost up to time $k$ is defined as $C_{0,k}:= \sum_{t=0}^k  \gamma^t c(s_t,a_t) = \frac{1}{N} \sum_{t=0}^k  \gamma^t \sum_{n=1}^N c_n(s_t,a_t)$. Thus discounted average cost over the infinite horizon is given by  $C_{0,\infty}=\lim_{k \rightarrow \infty} C_{0,k}$. 

In reinforcement learning (RL), the transition probability $P$ is not known, and costs are revealed to agents through repeated interactions with the environment \cite{sutton2018reinforcement}.  
The objective is to learn a policy that minimizes the expected accumulated discounted cost. One widely-used approach to learn such a policy is the Q-learning algorithm \cite{watkins1992q}. 
Using a state-action value function $Q^\pi(s,a):= \mathbb{E}_{\pi}[C_{0,\infty}|s_0=s,a_0=a, \pi]$, the Q-learning algorithm seeks to find the optimal value $Q^*(s,a)$ corresponding to the optimal policy $\pi^*$ such that $Q^*(s,a) \leq Q^\pi(s,a)$ for all $(s,a) \in \mathcal{S} \times \mathcal{A}$ and any policy $\pi$.

\subsection{QD-Learning}

QD-learning is a multi-agent distributed variant of Q-learning when agent behaviors in unknown environments were also influenced an underlying communication graph $\mathcal{G}$ that was first proposed in \cite{kar2013cal}. 
In the QD-learning algorithm \cite{kar2013cal}, at time-step $k$, each agent $n$ maintains a sequence of state-action value functions $\{Q_{n,k}(s,a)\} \in \mathbb{R}^{|\mathcal{S}\times \mathcal{A}|}$  for all state-action pairs $(s,a) \in \mathcal{S}\times \mathcal{A}$. The sequence $\{Q_{n,k}(s,a)\}$ is updated according to the following rule \cite{kar2013cal}:
\begin{align} \label{qd_l_update_orig}
&Q_{n,k+1}(s_k,a_k)= (1-\alpha_k)  Q_{n,k}(s_k,a_k) \nonumber \\
&\quad \qquad + \alpha_k \big(c_n(s_k,a_k) + \gamma  \min_{a' \in \mathcal{A}} Q_{n,k}(s_{k+1},a') \big)  \nonumber \\
&\qquad \qquad  +\beta_k \sum_{l \in \mathcal{N}(n)} \big(Q_{n,k}(s_k,a_k)- Q_{l,k}(s_k,a_k)\big),
\end{align}
where the weight sequences $\{\alpha_k\}$ and $\{\beta_k\}$ are given by 
\begin{align} 
& \alpha_k = \frac{a}{(k+1)^{\tau_1}}, \label{eq:alpha_orig} \\
& \beta_k = \frac{b}{(k+1)^{\tau_2}},\label{eq:beta_orig}
\end{align}
with $a$ and $b$ being positive constants. Equations \eqref{eq:alpha_orig} and \eqref{eq:beta_orig} guarantee that the excitation for the innovation and consensus terms in Eqn. \eqref{qd_l_update_orig} are persistent; i.e., $\sum_k \alpha_k = \infty$ and $\sum_k \beta_k = \infty$. The sequences $\{\alpha_k\}$ and $\{\beta_k\}$ further satisfy $\sum_k \alpha_k^2 < \infty$, $\sum_k \beta_k^2 < \infty$ and $\frac{\beta_k}{\alpha_k} \rightarrow \infty$ as $k \rightarrow \infty$. Constants $a$ and $b$ in Eqns. \eqref{eq:alpha_orig} and \eqref{eq:beta_orig} are also chosen so that $(I_N- \beta_k L- \alpha_k I_N)$ is positive semidefinite for all $k$ where $L$ is the Laplacian matrix of graph $\mathcal{G}$ and $I_N$ denotes the identity matrix in $\mathbb{R}^{N \times N}$. One way to ensure such positive semidefiniteness is to choose $a$ and $b$ so that $a+Nb \leq 1$. We refer the reader to \cite{kar2013cal} for a detailed discussion on the selection of the weight sequence. When the weight sequences are chosen appropriately, the authors of \cite{kar2013cal} showed that the sequence $\{Q_{n,k}(s,a)\}$ asymptotically reaches consensus for all agents and the value achieved at consensus was equal to the optimal action-value function $Q^*(s,a)$.

\subsection{Conditional Value-at-Risk (CVaR)}
Let $Z$ be a bounded random variable on the probability space $(\Omega, \mathcal{F},P)$ with cumulative distribution $F(z)=P(Z\leq z)$. The value-at-risk (VaR) at confidence level $y \in (0,1]$ is 
$VaR_{y}(Z):= min \{z| F(z) \geq 1- y \}$~\cite{rockafellar2002conditional}. The conditional value-at-risk (CVaR) at confidence level $y$ is defined as $CVaR_{y}(Z):= \mathbb{E}[Z|Z \geq VaR_{y}(Z)]$~\cite{rockafellar2002conditional}, and represents the expected value of $Z$, conditioned on the $y$ quantile of the tail distribution. We note that $CVaR_{y}(Z)=\mathbb{E}(Z)$ when $y=1$ and $CVaR_{y}(Z) \rightarrow max\{Z\}$ as $y \rightarrow 0$. 
The CVaR of a random variable $Z$ can be interpreted as the worst-case expectation of $Z$ under a perturbed distribution $\xi P$, as given by the following result from \cite{chow2015risk,artzner1999coherent}. 
\begin{prop}[Dual CVaR formulation \cite{chow2015risk,artzner1999coherent}]\label{DualRep}
Let $\mathbb{E}_{\xi}[Z]$ denote the $\xi$-weighted expectation of $Z$ and $\mathcal{U}_{CVaR}(y,P):=\{ \xi: \xi(\omega) \in [0,\frac{1}{y}], \int_{\omega \in \Omega} \xi(\omega) P(\omega) d\omega=1\}$. Then, 
\begin{align*}
    CVaR_{y}(Z)&=\max_{\xi \in \mathcal{U}_{CVaR}(y,P)}\mathbb{E}_{\xi}[Z].
\end{align*}
\end{prop}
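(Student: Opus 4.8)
The plan is to prove the identity by a two-sided argument: first show that $\mathbb{E}_\xi[Z]\le CVaR_y(Z)$ for \emph{every} feasible $\xi\in\mathcal{U}_{CVaR}(y,P)$, and then exhibit one particular feasible $\xi^\star$ that attains the value $CVaR_y(Z)$, which simultaneously establishes equality and upgrades the $\sup$ to a $\max$. The bridge I would use in both directions is the Rockafellar--Uryasev variational representation
\begin{align*}
CVaR_y(Z)=\min_{t\in\mathbb{R}}\Big\{\,t+\tfrac{1}{y}\,\mathbb{E}_P\big[(Z-t)^+\big]\,\Big\},
\end{align*}
whose minimum is attained at $t^\star=VaR_y(Z)$; this is standard and I would either cite it from the same references or prove it in one line by noting that $g(t):=t+\tfrac1y\mathbb{E}_P[(Z-t)^+]$ is convex with $g'(t)=1-\tfrac1y P(Z>t)\in[1-\tfrac1y,\,1]$, which changes sign exactly at the $(1-y)$-quantile of $Z$.

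For the upper bound, I would fix an arbitrary $\xi\in\mathcal{U}_{CVaR}(y,P)$ and an arbitrary $t\in\mathbb{R}$. Using $Z-t\le(Z-t)^+$ together with $\xi\ge 0$, then $0\le\xi\le\tfrac1y$ together with $(Z-t)^+\ge 0$, and finally $\int_\Omega\xi(\omega)P(\omega)\,d\omega=1$, one gets
\begin{align*}
\mathbb{E}_\xi[Z]=t+\mathbb{E}_\xi[Z-t]\le t+\mathbb{E}_\xi\big[(Z-t)^+\big]=t+\mathbb{E}_P\big[\xi\,(Z-t)^+\big]\le t+\tfrac1y\,\mathbb{E}_P\big[(Z-t)^+\big].
\end{align*}
Taking the infimum over $t$ on the right and invoking the variational formula gives $\mathbb{E}_\xi[Z]\le CVaR_y(Z)$, hence $\sup_{\xi}\mathbb{E}_\xi[Z]\le CVaR_y(Z)$.

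For attainment, I would write $v:=VaR_y(Z)$ and define $\xi^\star$ to be $\tfrac1y$ on $\{Z>v\}$, $0$ on $\{Z<v\}$, and a constant $\kappa$ on the level set $\{Z=v\}$, where $\kappa$ is chosen so that $\int\xi^\star P=1$, i.e. $\kappa\,P(Z=v)=1-\tfrac1y P(Z>v)$ (take $\kappa=0$ if $P(Z=v)=0$). The defining inequalities of the quantile, $P(Z>v)\le y\le P(Z\ge v)$, force $0\le\kappa\le\tfrac1y$, so $\xi^\star\in\mathcal{U}_{CVaR}(y,P)$. A direct computation then yields $\mathbb{E}_{\xi^\star}[Z]=\tfrac1y\mathbb{E}_P[Z\,\mathbbm{1}\{Z>v\}]+v\big(1-\tfrac1y P(Z>v)\big)=v+\tfrac1y\mathbb{E}_P[(Z-v)^+]=CVaR_y(Z)$, the last step by the variational formula evaluated at its minimizer $t^\star=v$. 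Together with the upper bound this finishes the proof.

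I expect the only genuinely delicate point to be the handling of an atom of $Z$ at $v=VaR_y(Z)$: the mass at that level must be split fractionally through $\kappa$, the bounds $\kappa\in[0,\tfrac1y]$ must be extracted from the two-sided quantile inequalities, and one must reconcile the excerpt's definition $CVaR_y(Z)=\mathbb{E}[Z\mid Z\ge VaR_y(Z)]$ with $v+\tfrac1y\mathbb{E}_P[(Z-v)^+]$ (they agree when $F$ is continuous at $v$, and the latter is the robust expression otherwise). As an alternative that sidesteps all measure-theoretic bookkeeping in the setting of this paper, where $\mathcal{S}$, $\mathcal{A}$ and hence the relevant sample space are finite, I would note that maximizing $\mathbb{E}_\xi[Z]$ over $\mathcal{U}_{CVaR}(y,P)$ is a linear program of fractional-knapsack type: order the outcomes by decreasing $Z$-value and greedily assign density $\tfrac1y$ from the top until the unit probability budget is exhausted; the greedy solution is optimal for fractional knapsack and its objective is precisely the tail average $CVaR_y(Z)$.
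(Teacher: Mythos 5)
The paper itself does not prove Proposition~\ref{DualRep}: it is imported as a known result from \cite{chow2015risk,artzner1999coherent}, so there is no in-paper argument to compare against. Judged on its own, your proof is the standard duality argument and it is correct. The upper bound $\mathbb{E}_{\xi}[Z]\leq t+\tfrac{1}{y}\mathbb{E}_P[(Z-t)^+]$ for every feasible $\xi$ and every $t$, combined with the Rockafellar--Uryasev representation, gives one direction; your $\xi^\star$ (weight $\tfrac{1}{y}$ above $v=VaR_y(Z)$, fractional weight $\kappa$ on the atom $\{Z=v\}$, zero below) is feasible precisely because of the quantile inequalities $P(Z>v)\leq y\leq P(Z\geq v)$, and it attains $v+\tfrac{1}{y}\mathbb{E}_P[(Z-v)^+]$, which is indeed the minimum of the RU objective since $0$ lies in its subdifferential at $t=v$. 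The caveat you flag is real but is a defect of the paper's informal definition rather than of your argument: when $Z$ has an atom at $v$ with $P(Z\geq v)>y$, the dual maximum equals the RU value and not the literal conditional expectation $\mathbb{E}[Z\mid Z\geq VaR_y(Z)]$ (e.g.\ $Z\in\{0,1\}$ with $P(Z=1)=0.1$ and $y=0.2$ gives conditional expectation $0.1$ but dual value $0.5$); the cited references use the RU/atom-splitting definition of CVaR, under which your two-sided argument proves the proposition exactly, and the two definitions coincide whenever $F$ is continuous at $v$. Your closing remark that in a finite sample space the maximization is a fractional-knapsack linear program solved greedily by loading mass $\tfrac{1}{y}$ onto the largest outcomes is a sound elementary alternative, though note that the random variable actually used later in the paper, $C_{0,\infty}$, does not have finite support, so the general argument you gave first is the one that matters.
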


The above dual representation, together with the coherent property of the CVaR metric was used to derive a decomposition of $CVaR_{y}(Z)$ in a recursive manner \cite{chow2015risk,pflug2016time}. 
In our setting, the random variable $Z$ is a sequence of costs. 

\begin{prop} [CVaR decomposition \cite{chow2015risk}]\label{CVaR_decomp}
For $k\geq 0$, let $Z=\big(Z_{k+1}, Z_{k+2}, \cdots)$ denote the sequence of costs starting from time $k+1$ to onward. Then, the conditional CVaR, under a policy $\pi$ satisfies 
\begin{align*}
    &CVaR_{y}(Z|h_k, \pi)\\&=\max_{\xi}~\mathbb{E}[\xi(s_{k+1})CVaR_{y \xi(s_{k+1})}(Z|h_{k+1},\pi)|h_k,\pi], 
\end{align*}
where $\xi \in \mathcal{U}_{CVaR}(y,P(\cdot|s_t,a_t))$, and the expectation is with respect to $s_{k+1}$. 
\end{prop}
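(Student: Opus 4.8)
The plan is to obtain the recursion directly from the dual representation in Proposition~\ref{DualRep} by factoring the worst-case density along the first transition. Write $h_{k+1}=(h_k,a_k,s_{k+1})$ and regard $Z$ as a bounded functional of the tail trajectory $(s_{k+1},a_{k+1},s_{k+2},\dots)$, which under $\pi$ is generated by first drawing $s_{k+1}\sim P(\cdot|s_k,a_k)$ and then continuing from $h_{k+1}$. First I would apply Proposition~\ref{DualRep} to get $CVaR_y(Z|h_k,\pi)=\max_{\zeta}\mathbb{E}_{\zeta}[Z|h_k,\pi]$, the maximum being over densities $\zeta$ in $\mathcal{U}_{CVaR}(y,\cdot)$ for the tail distribution. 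The goal is then to split this single maximization into an outer maximization over a reweighting $\xi$ of the one-step transition $P(\cdot|s_k,a_k)$ and an inner maximization over the conditional reweighting of the remaining randomness given $s_{k+1}$.

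The \emph{key lemma} is a correspondence between constraint sets: a density $\zeta$ on the tail lies in $\mathcal{U}_{CVaR}(y,\cdot)$ if and only if it factors as $\zeta=\xi(s_{k+1})\,\zeta'(\cdot\,|\,s_{k+1})$, where $\xi(s')\in[0,1/y]$ with $\sum_{s'}\xi(s')P(s'|s_k,a_k)=1$ (i.e.\ $\xi\in\mathcal{U}_{CVaR}(y,P(\cdot|s_k,a_k))$), and, for every $s_{k+1}$ with $\xi(s_{k+1})>0$, the conditional factor satisfies $\zeta'(\cdot|s_{k+1})\in\mathcal{U}_{CVaR}\big(y\,\xi(s_{k+1}),\,\cdot\big)$. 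The box constraint passes through cleanly since $\xi(s_{k+1})\,\zeta'(\cdot|s_{k+1})\le\xi(s_{k+1})\cdot\frac{1}{y\xi(s_{k+1})}=\frac1y$, and conversely $\zeta(s_{k+1},\cdot)\le\frac1y$ forces $\zeta'(\cdot|s_{k+1})\le\frac1{y\xi(s_{k+1})}$; the normalization decomposes by the tower property, with $\xi(s')$ recovered as $\sum_{\tau'}\zeta(s',\tau')P(\tau'|s')$ and $\zeta'(\cdot|s')=\zeta(s',\cdot)/\xi(s')$. Given this factorization, the tower property yields $\mathbb{E}_{\zeta}[Z|h_k,\pi]=\mathbb{E}\big[\xi(s_{k+1})\,\mathbb{E}_{\zeta'}[Z|h_{k+1},\pi]\,\big|\,h_k,\pi\big]$, the outer expectation being under $P(\cdot|s_k,a_k)$.

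Finally I would interchange the inner maximization with the expectation over $s_{k+1}$. Because $\mathcal{S}$ is finite, the objective is $\sum_{s'}P(s'|s_k,a_k)\,\xi(s')\,\mathbb{E}_{\zeta'(\cdot|s')}[Z|(h_k,a_k,s')]$ with nonnegative coefficients, so the conditional factors $\zeta'(\cdot|s')$ may be optimized separately for each $s'$; by Proposition~\ref{DualRep} applied at history $(h_k,a_k,s')$ and confidence level $y\xi(s')$, this optimum equals $CVaR_{y\xi(s')}(Z|(h_k,a_k,s'),\pi)$. Substituting back leaves exactly $\max_{\xi\in\mathcal{U}_{CVaR}(y,P(\cdot|s_k,a_k))}\mathbb{E}\big[\xi(s_{k+1})\,CVaR_{y\xi(s_{k+1})}(Z|h_{k+1},\pi)\,\big|\,h_k,\pi\big]$, which is the claimed identity.

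The main obstacle I anticipate is the bookkeeping around degenerate states where $\xi(s_{k+1})=0$: there the conditional factor is unconstrained by and absent from $\zeta$, so one must adopt the convention $CVaR_{0}(\cdot)=\mathrm{ess\,sup}(\cdot)$ (consistent with the limit $y\to0$ noted earlier) while observing that this term is multiplied by $0$ and hence cannot change either side; making the two directions of the factorization rigorous around this case—and justifying that an optimal $\zeta$ may always be taken in product form—is the technical crux, though it is routine given that $\mathcal{S}$ is finite and the costs are bounded with $\gamma<1$.
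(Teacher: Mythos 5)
The paper does not actually prove Proposition~\ref{CVaR_decomp}; it is imported verbatim from \cite{chow2015risk} (which in turn rests on the Pflug--Pichler temporal decomposition of coherent risk measures). Your argument---apply the dual representation of Proposition~\ref{DualRep} to the conditional law given $h_k$, factor the worst-case density as $\zeta=\xi(s_{k+1})\,\zeta'(\cdot\mid s_{k+1})$ with the constraint sets shown to correspond exactly, then use the tower property and optimize the conditional factors separately for each $s'$ (finite $\mathcal{S}$) to recover $CVaR_{y\xi(s')}(Z\mid h_{k+1},\pi)$---is precisely the standard proof of that cited result, and it is correct, including your treatment of the degenerate case $\xi(s_{k+1})=0$.
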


\subsection{Problem Formulation}
Our objective in this paper is to learn risk-aware policies for multi-agent reinforcement learning in a distributed manner. 
We use CVaR as a measure of risk sensitivity, and aim to learn a policy $\pi \in \Pi_H$ that will minimize a risk-sensitive discounted cost. 
We use the terms \emph{confidence level} and \emph{risk parameter} interchangeably to refer to the parameter $y \in (0,1]$. 
The challenge in solving this problem is that the transition probabilities are not known, and costs associated to taking an action in a particular state are revealed to agents only through repeated interactions with the environment. We formally state the problem that we want to address as: 
\begin{align} \label{prob}
 \min_{\pi \in \Pi_H} CVaR_{y}\big(C_{0,\infty}|s_0,\pi\big). 
\end{align}
In order to solve the problem given in Eqn. \eqref{prob}, we will first propose a CVaR-based Bellman operator for Q-learning and show that the operator has a fixed point that corresponds to the optimal solution. Then we will design a CVaR-based QD-learning algorithm and show that the value functions of individual agents reach a consensus. Through simulation, we also verify the convergence of the algorithm. 
\section{Bellman Operator Construction}\label{sec:Results1}

In this section, we define a Bellman operator in order to compute the optimal state-action value function for CVaR-based reinforcement learning. 
Similar to techniques used for other variants of Q-learning, e.g., \cite{watkins1992q, borkar2002q, borkar2021prospect, ramasubramanian2021reinforcement}, we 
establish the convergence of this operator by showing that it satisfies a contraction property. 
Of particular relevance is the result for CVaR-based dynamic programming for the convergence of state value functions of MDPs (whose transition probabilities and cost structures are known apriori) presented in \cite{chow2015risk}. 

Leveraging Proposition \ref{CVaR_decomp}, we first augment the state-action pair $(s,a)$ with an additional `continuous state' $y \in \mathcal{Y}=(0,1]$ which represents the confidence level for CVaR.  Then, for a given policy $\pi$ and CVaR confidence level $y \in \mathcal{Y}$, we define the augmented state-action value function as: 
\begin{align}\label{CVaRQFunc}
    Q^\pi(s,a,y)&:=CVaR_y(C_{0,\infty}|s_0=s,a_0=a,\pi).
\end{align}
To set up a dynamic programming characterization of $Q^\pi(s,a,y)$, we define a Bellman operator on the space of augmented state-action value functions. 
Consider the \emph{CVaR Bellman operator} $\mathbf{T}: \mathcal{S} \times \mathcal{A} \times \mathcal{Y} \rightarrow \mathcal{S} \times \mathcal{A} \times \mathcal{Y}$ and $\xi \in \mathcal{U}_{CVaR}(y,P(\cdot|s,a))$. Then, we define 
\begin{align} \label{Bellman_op}
\mathbf{T}[Q](s,a,y) := c(s,a)+ & \gamma \min_{a' \in \mathcal{A}} \Big[ \max_{\xi} \sum_{s' \in \mathcal{S}} \xi(s') \nonumber \\
& P(s'|s,a) Q(s',a', y\xi(s')) \Big].
\end{align}

Our first result formalizes the fact that the Bellman operator defined in Eqn. \eqref{Bellman_op} satisfies a contraction property. 


\begin{lemma} [Contraction]\label{lemma: contr}
Let $Q^1(s,a,y)$ and $Q^2(s,a,y)$ be two augmented state-action value functions for the same $(s,a,y)$ as defined in Eqn. (\ref{CVaRQFunc}). Then, the Bellman operator $\mathbf{T}$ defined in Eqn. \eqref{Bellman_op} is a contraction under the $sup$-norm. That is, 
$||\mathbf{T}[Q^1](s,a,y)-\mathbf{T}[Q^2](s,a,y)||_{\infty} \leq \gamma ||Q^1(s,a,y)-Q^2(s,a,y)||_{\infty}$.
\end{lemma}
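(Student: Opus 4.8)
The plan is to fix an arbitrary augmented state $(s,a,y)$, form the difference $\mathbf{T}[Q^1](s,a,y)-\mathbf{T}[Q^2](s,a,y)$, and observe that the additive term $c(s,a)$ cancels. What remains is $\gamma$ times the difference of the two nested $\min_{a'}\!\max_{\xi}$ expressions, so the whole argument reduces to: (i) two elementary inequalities for how $\min$ and $\max$ interact with a difference, (ii) the triangle inequality, and (iii) a single use of the normalization constraint built into the feasible set $\mathcal{U}_{CVaR}(y,P(\cdot|s,a))$.

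First I would record the two pointwise facts $\big|\min_{x}f(x)-\min_{x}g(x)\big|\le\max_{x}|f(x)-g(x)|$ and $\big|\max_{x}f(x)-\max_{x}g(x)\big|\le\max_{x}|f(x)-g(x)|$, each of which follows by plugging the optimizer of one side into the expression appearing on the other. Applying the first to the outer $\min_{a'}$ and the second to the inner $\max_{\xi}$ gives
\begin{align*}
\big|\mathbf{T}[Q^1](s,a,y)-\mathbf{T}[Q^2](s,a,y)\big|
&\le \gamma \max_{a'\in\mathcal{A}}\ \max_{\xi}\ \Big|\sum_{s'\in\mathcal{S}}\xi(s')P(s'|s,a)\big(Q^1(s',a',y\xi(s'))-Q^2(s',a',y\xi(s'))\big)\Big|.
\end{align*}

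Next, since every feasible $\xi$ satisfies $\xi(s')\ge 0$, the triangle inequality lets me move the absolute value inside the sum; bounding each factor $|Q^1(s',a',y\xi(s'))-Q^2(s',a',y\xi(s'))|$ by $\|Q^1-Q^2\|_\infty$ and pulling this constant out yields
\begin{align*}
\big|\mathbf{T}[Q^1](s,a,y)-\mathbf{T}[Q^2](s,a,y)\big|\le \gamma\,\|Q^1-Q^2\|_\infty\ \max_{\xi}\sum_{s'\in\mathcal{S}}\xi(s')P(s'|s,a).
\end{align*}
By the definition of $\mathcal{U}_{CVaR}(y,P(\cdot|s,a))$ we have $\sum_{s'\in\mathcal{S}}\xi(s')P(s'|s,a)=1$ for every feasible $\xi$, so the right-hand side is exactly $\gamma\|Q^1-Q^2\|_\infty$. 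As $(s,a,y)$ was arbitrary, taking the supremum over all augmented states on the left gives $\|\mathbf{T}[Q^1]-\mathbf{T}[Q^2]\|_\infty\le\gamma\|Q^1-Q^2\|_\infty$.

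Every individual step is routine; the only places that need care are the bookkeeping around $\xi$: one must invoke non-negativity of $\xi$ to apply the triangle inequality, use the normalization $\sum_{s'}\xi(s')P(s'|s,a)=1$ (rather than $\sum_{s'}P(s'|s,a)=1$) for the final collapse, and note that the perturbed confidence $y\xi(s')\in[0,1]$ stays in the range over which the sup-norm is taken (the value at $y\xi(s')=0$ being irrelevant, since it is multiplied by $\xi(s')=0$). I do not expect a genuine obstacle beyond stating the $\min/\max$ interchange inequalities explicitly and tracking the $\xi$-constraint correctly.
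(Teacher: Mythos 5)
Your proof is correct, but it takes a genuinely different route from the paper. The paper follows the classical dynamic-programming template: it first establishes that $\mathbf{T}$ is monotone (from $\xi(s')P(s'|s,a)\ge 0$) and satisfies a constant-shift property $\mathbf{T}[Q+c]=\gamma c+\mathbf{T}[Q]$ (from $\sum_{s'}\xi(s')P(s'|s,a)=1$), and then applies $\mathbf{T}$ to the sandwich $Q^2-\|Q^1-Q^2\|_\infty\le Q^1\le Q^2+\|Q^1-Q^2\|_\infty$ to conclude. You instead estimate the difference directly: the cost term cancels, the nonexpansiveness inequalities $|\min_x f-\min_x g|\le\max_x|f-g|$ and $|\max_x f-\max_x g|\le\max_x|f-g|$ strip off the outer $\min_{a'}$ and inner $\max_\xi$, and the triangle inequality plus the normalization of $\xi(\cdot)P(\cdot|s,a)$ collapse the remaining sum to $\gamma\|Q^1-Q^2\|_\infty$. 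The two arguments use exactly the same structural facts about the risk envelope (non-negativity and normalization of the perturbed kernel), so neither is deeper than the other; your version is more self-contained at the level of a single chain of inequalities and is explicit about the point the paper leaves implicit, namely that the perturbed confidence level $y\xi(s')$ stays in $[0,1]$ and that the endpoint $y\xi(s')=0$ is harmless because it appears multiplied by $\xi(s')=0$, while the paper's monotonicity-plus-shift formulation isolates two reusable properties of $\mathbf{T}$ that can serve later arguments (e.g., value-iteration style results such as their Lemma 2) without redoing the estimate. One small bookkeeping note: since $\mathcal{U}_{CVaR}(y,P(\cdot|s,a))$ is compact and $\mathcal{A}$ is finite the optimizers you "plug in" exist, so the $\min/\max$ interchange inequalities apply as stated (otherwise an $\epsilon$-argument with suprema would be needed).
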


\begin{proof}
Our proof uses an argument similar to \cite{chow2015risk}. 
However, different from \cite{chow2015risk}, the environment in our setting is unknown; this will require reasoning about state-action value functions~\cite{sutton2018reinforcement}, rather than state value functions used in \cite{chow2015risk}. 

Since $\xi(s') P(s'|s,a') \geq 0$ for all $\xi \in \mathcal{U}_{CVaR}(y,P(\cdot|s,a))$, we have $\mathbf{T}[Q^1](s,a,y) \leq \mathbf{T}[Q^2](s,a,y)$ whenever $Q^1(s,a,y) \leq Q^2(s,a,y)$ for all $s \in \mathcal{S}, a \in \mathcal{A}, y \in \mathcal{Y}$. 
Moreover, since $ \sum_{s' \in \mathcal{S}} \xi(s') P(s'|s,a') =1$ for all $\xi \in \mathcal{U}_{CVaR}(y,P(\cdot|s,a))$, we have $\mathbf{T}[Q+c](s,a,y)= c \gamma+ \mathbf{T}[Q](s,a,y)$ where $c$ is a constant. Thus, we have established that $\mathbf{T}[Q](s,a,y)$ exhibits the monotonicity and constant shift properties. 
Now, from the definition of sup-norm, for all
$s \in \mathcal{S}, a \in \mathcal{A}, y \in \mathcal{Y}$ we have $|Q^1(s,a,y) - Q^2(s,a,y)| \leq ||Q^1(s,a,y) - Q^2(s,a,y)||_{\infty}$ which is equivalent to
\begin{align} \label{ineq}
 &Q^2(s,a,y) -||Q^1(s,a,y) - Q^2(s,a,y)||_{\infty} \leq Q^1(s,a,y) \nonumber \\ 
 &\leq Q^2(s,a,y)+||Q^1(s,a,y) - Q^2(s,a,y)||_{\infty}.   
\end{align}
Applying the Bellman operator to the terms in Eqn. \eqref{ineq} and leveraging the monotonicity and constant shift properties with $c=\pm ||Q^1(s,a,y) - Q^2(s,a,y)||_{\infty}$, for all $s \in \mathcal{S}, a \in \mathcal{A}, y \in \mathcal{Y}$, we obtain 
\begin{align} \label{ineq2}
&\mathbf{T}[Q^2](s,a,y)- \gamma||Q^1(s,a,y) - Q^2(s,a,y)||_{\infty} \leq \nonumber \\
&~~~~~~~~~~~\mathbf{T}[Q]^1(s,a,y)\leq \mathbf{T}[Q]^2(s,a,y) \nonumber \\
&~~~~~~~~~~~~~~~+\gamma||Q^1(s,a,y) - Q^2(s,a,y)||_{\infty}.
\end{align}
This is equivalent to $|\mathbf{T}[Q^1](s,a,y)-\mathbf{T}[Q^2](s,a,y)| \leq \gamma ||Q^1(s,a,y) - Q^2(s,a,y)||_{\infty}$ $\forall s \in \mathcal{S}, a \in \mathcal{A}, y \in \mathcal{Y}$, which completes the proof.
\end{proof}

Lemma \ref{lemma: contr} allows us to establish the convergence of the CVar Bellman operator $\mathbf{T}$ to a fixed point $Q^f(s,a,y)$ when applied repeatedly. Let $Q^*(s,a,y)$ be the optimal state-action value function with respect to CVaR confidence level $y$ i.e. 
\begin{align} \label{eq:Q*}
 Q^*(s,a,y)= \min_{\pi \in \Pi_H} CVaR_y(\mathcal{C}_{0,\infty}|x_0=s, a_0=a, \pi).   
\end{align}
We now derive an intermediate result that will allow us to show the optimality of the fixed point of CVaR Bellman operator i.e. $Q^f(s,a,y)= Q^*(s,a,y)$. To do so, by $\mathbf{T}^k$ we denote the application of the Bellman operator $\mathbf{T}$ for $k$ times for some $k \in \mathbb{N}$.

\begin{lemma} \label{lemma:optimal}
Let $Q_k(s,a,y):=\mathbf{T}^k[Q_0](s,a,y)$ for all $s \in \mathcal{S}, a \in \mathcal{A}, y \in \mathcal{Y}$ where $k \in \mathbb{N}$ and $Q_0$ is an arbitrary initial value of $Q(s,a,y)$ for all $s \in \mathcal{S}, a \in \mathcal{A}, y \in \mathcal{Y}$. Then $Q_k(s,a,y)= \min_{\pi \in \Pi_k} CVaR_y(\mathcal{C}_{0,k-1}+ \gamma^k Q_0|s_0=s,a_0=a,\pi)$.
\end{lemma}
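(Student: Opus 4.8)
The plan is to prove the identity by induction on $k$, peeling off one stage of the Bellman recursion at a time. The two ingredients are the CVaR decomposition of Proposition \ref{CVaR_decomp}, which turns $CVaR_y$ over the full horizon into a $\max_\xi$ of a one-step expectation of $CVaR_{y\xi(s')}$ over the remaining horizon, and the translation-invariance and positive-homogeneity of CVaR (its coherence), which let us split off the deterministic one-step cost $c(s,a)$ and factor out the discount $\gamma$.

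\emph{Base case.} Reading $\mathbf{T}^0$ as the identity and the empty sum $\mathcal{C}_{0,-1}$ as zero, both sides reduce to $Q_0(s,a,y)$: conditioned on $s_0=s$, $a_0=a$ the quantity inside the $CVaR$ is deterministic, and $CVaR_y$ of a constant equals that constant. If the smallest admissible index is instead $k=1$, that case is obtained from one application of the step below with $Q_0$ in the role of $Q_k$.

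\emph{Inductive step.} Assume the claim for $k$. Starting from the right-hand side for $k+1$, I would split $\mathcal{C}_{0,k}=c(s_0,a_0)+\gamma\widetilde{\mathcal{C}}$, where $\widetilde{\mathcal{C}}$ is the discounted cost of the trajectory re-indexed so that time $1$ plays the role of time $0$; by coherence of $CVaR$ this yields $CVaR_y(\mathcal{C}_{0,k}+\gamma^{k+1}Q_0\mid s_0=s,a_0=a,\pi)=c(s,a)+\gamma\,CVaR_y(\widetilde{\mathcal{C}}+\gamma^{k}Q_0\mid s_0=s,a_0=a,\pi)$. I would then apply Proposition \ref{CVaR_decomp} to the inner term to obtain $\max_\xi\sum_{s'}\xi(s')P(s'|s,a)\,CVaR_{y\xi(s')}(\widetilde{\mathcal{C}}+\gamma^{k}Q_0\mid h_1=(s,a,s'),\pi)$, and observe that, after the re-indexing, each conditional term is exactly of the form appearing in the inductive hypothesis at state $s'$, confidence level $y\xi(s')$, with $\pi$ restricted to the branch $s_1=s'$ serving as the continuation policy. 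Finally, decomposing $\min_{\pi\in\Pi_{k+1}}$ into a minimization over the action taken after the first transition and over the per-successor continuation policies, and invoking the inductive hypothesis branch-by-branch, the right-hand side should collapse to $c(s,a)+\gamma\min_{a'}\max_\xi\sum_{s'}\xi(s')P(s'|s,a)\,Q_k(s',a',y\xi(s'))$, which is $\mathbf{T}[Q_k](s,a,y)=Q_{k+1}(s,a,y)$.

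\emph{Main obstacle.} The step needing the most care is moving the minimization over continuation policies inside both the $\max_\xi$ and the summation over successor states. For a fixed $\xi$ the continuation policies on distinct successor states decouple, so the minimization distributes over $\sum_{s'}$; the delicate point is the interchange with $\max_\xi$, since only $\min\max\ge\max\min$ is free, and the optimal continuation depends on the perturbed level $y\xi(s')$ rather than on $y$ itself. As in \cite{chow2015risk}, I would resolve this using the dual representation of Proposition \ref{DualRep} — which identifies the nested stagewise maximizations over $\xi$ with a single worst-case reweighting of the trajectory distribution — together with compactness of $\mathcal{U}_{CVaR}(y,P(\cdot|s,a))$ so that the relevant maxima are attained; a further small point is to align the action minimization $\min_{a'}$ in Eqn. \eqref{Bellman_op} with the choice made at time $1$ by a policy in $\Pi_{k+1}$.
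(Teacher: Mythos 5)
Your proposal is correct and follows essentially the same route as the paper: induction on $k$, peeling off the first stage via the coherence (translation-invariance and positive-homogeneity) of CVaR, invoking the decomposition of Proposition \ref{CVaR_decomp}, and substituting the inductive hypothesis at the perturbed level $y\xi(s')$, the only cosmetic difference being that you manipulate the right-hand side toward $\mathbf{T}[Q_k]$ while the paper unfolds $\mathbf{T}[Q_i]$ toward the CVaR expression. Your explicit flagging of the interchange of $\min$ over continuation policies with $\max_\xi$ (handled via Proposition \ref{DualRep} and compactness of $\mathcal{U}_{CVaR}$, as in \cite{chow2015risk}) is a point the paper's proof passes over silently, so no gap relative to the paper.
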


\begin{proof} We prove this result using induction. For the base case i.e. $k=1$, we have 
\begin{align}
 & Q_1(s,a,y) =\mathbf{T}[Q_0](s,a,y) = c(s,a) + \gamma \min_{a' \in \mathcal{A}} \nonumber \\ 
 &~~\Big[\max_{\xi \in \mathcal{U}_{CVaR}(y,P(\cdot|s,a))} \sum_{s' \in \mathcal{S}} \xi(s') P(s'|s,a)  Q_0(s',a', y\xi(s')) \Big]   \nonumber \\
 &= c(s,a)+ \min_{a' \in \mathcal{A}} \Big[ \gamma \max_{\xi \in \mathcal{U}_{CVaR}(y,P(\cdot|s,a))} \mathbb{E}_{\xi(s')}[Q_0 \nonumber \\
 &~~~~~~~~~~~~~~~~~~~~~~~~~~|s_0=s',a_0=a',\pi]\Big] \nonumber  \\
 &=\min_{a' \in \mathcal{A}} \Big[c(s,a)+ \gamma \max_{\xi \in \mathcal{U}_{CVaR}(y,P(\cdot|s,a))} \mathbb{E}_{\xi(s')} [CVaR_{y\xi(s')} \nonumber \\
& ~~~~~~~~~~~~~~~~~~~~~~~~~~~~~~(Q_0|s',a')| s_0=s,a_0=a,\pi)]\Big] \nonumber \\
 &= \min_{a' \in \mathcal{A}} \Big[c(s,a)+ \gamma CVaR_y(Q_0|s_0=s,a_0=a,\pi) \Big] \nonumber \\
  &= \min_{\pi \in \pi_1} CVaR_y(\mathcal{C}_{0,0}+ \gamma Q_0|s_0=s,a_0=a,\pi). \nonumber 
\end{align}
 Thus the base case is proved. Now we assume that the result holds for $k=i$. Now for $k=i+1$ we have 
\begin{align}
 & Q_{i+1}(s,a,y) = \mathbf{T}^{i+1}[Q_0](s,a,y) = \mathbf{T}[Q_i](s,a,y) \nonumber \\
 &= c(s,a) + \gamma \min_{a' \in \mathcal{A}} 
\Big[ \max_{\xi \in \mathcal{U}_{CVaR}(y,P(\cdot|s,a))}\sum_{s' \in \mathcal{S}} \xi(s') P(s'|s,a) \nonumber \\
&~~~~~~~~~~~~~~~~~~~~~~~~~~~~~~Q_i(s',a', y\xi(s')) \Big]   \nonumber \\
 &= c(s,a) + \gamma \min_{a' \in \mathcal{A}} 
\Big[ \max_{\xi \in \mathcal{U}_{CVaR}(y,P(\cdot|s,a))}\sum_{s' \in \mathcal{S}} \xi(s') P(s'|s,a) \nonumber \\
&~~~~~~\min_{\pi \in \Pi_i} CVaR_{y\xi(s')}(\mathcal{C}_{0,i-1}+ \gamma^i Q_0|s_0=s',a_0=a',\pi) \Big]   \nonumber \\
 &=\min_{a' \in \mathcal{A}} \Big[c(s,a)+ \max_{\xi \in \mathcal{U}_{CVaR}(y,P(\cdot|s,a))} \mathbb{E}_{\xi(s')} [\min_{\pi \in \Pi_i} \nonumber \\
 &~~~~~~ CVaR_{y\xi(s')}(\mathcal{C}_{1,i}+ \gamma^{i+1} Q_0|s',a') |s_0=s,a_0=a,\pi)]\Big] \nonumber \\
  &= \min_{\pi \in \pi_{i+1}} CVaR_y(\mathcal{C}_{0,i}+ \gamma^{i+1} Q_0|s_0=s,a_0=a,\pi).
\end{align}
Hence the proof is complete by induction.
\end{proof}

The main result of this section proves that the fixed point of the Bellman operator $\mathbf{T}[Q](s,a,y)$ is the optimal value function $Q^*(s,a,y)$.

\begin{theorem} \label{Q_MDP}
As $k \rightarrow \infty$, $Q_k(s,a,y)$ converges to a unique fixed point $\mathbf{T}[Q^*](s,a,y)= Q^*(s,a,y)$ for all $s \in \mathcal{S}$, $a \in \mathcal{A}$ and $y \in \mathcal{Y}$.
\end{theorem}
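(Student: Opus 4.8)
The plan is to combine the contraction property of Lemma~\ref{lemma: contr} with the finite-horizon characterization of Lemma~\ref{lemma:optimal}. First I would note that $\mathbf{T}$ acts on the space of bounded functions $Q:\mathcal{S}\times\mathcal{A}\times\mathcal{Y}\to\mathbb{R}$ equipped with the sup-norm, which is a complete metric space; since $\mathbf{T}$ is a $\gamma$-contraction with $\gamma\in[0,1)$, the Banach fixed-point theorem gives a unique fixed point $Q^f$ with $\mathbf{T}[Q^f]=Q^f$, and $Q_k=\mathbf{T}^k[Q_0]\to Q^f$ in sup-norm for every bounded initializer $Q_0$. It then only remains to identify $Q^f$ with the optimal value function $Q^*$ of Eqn.~\eqref{eq:Q*}.

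To do that I would compare $Q_k(s,a,y)=\min_{\pi\in\Pi_k}CVaR_y(\mathcal{C}_{0,k-1}+\gamma^kQ_0\mid s_0=s,a_0=a,\pi)$ from Lemma~\ref{lemma:optimal} with $Q^*(s,a,y)=\min_{\pi\in\Pi_H}CVaR_y(\mathcal{C}_{0,\infty}\mid s_0=s,a_0=a,\pi)$. The central estimate is that $CVaR_y$ is $1$-Lipschitz under sup-norm perturbations of its argument: from the dual representation in Proposition~\ref{DualRep}, for bounded $X,Y$ we have $|CVaR_y(X)-CVaR_y(Y)|\le\|X-Y\|_\infty$, because every $\xi\in\mathcal{U}_{CVaR}(y,P)$ integrates to one (equivalently, $CVaR_y$ is monotone and translation invariant). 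Since $\|c\|_\infty\le C_{max}$ and $Q_0$ is bounded, the truncated payoff differs from the infinite-horizon payoff by $\big\|(\mathcal{C}_{0,k-1}+\gamma^kQ_0)-\mathcal{C}_{0,\infty}\big\|_\infty\le\varepsilon_k:=\tfrac{\gamma^kC_{max}}{1-\gamma}+\gamma^k\|Q_0\|_\infty\to0$.

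I would then turn this into a two-sided bound $|Q_k(s,a,y)-Q^*(s,a,y)|\le\varepsilon_k$. For the upper bound, take a $\delta$-optimal $\pi^*\in\Pi_H$ for $Q^*(s,a,y)$; its restriction to the first $k$ stages lies in $\Pi_k$ and gives the same value of $\mathcal{C}_{0,k-1}+\gamma^kQ_0$, so $Q_k(s,a,y)\le CVaR_y(\mathcal{C}_{0,k-1}+\gamma^kQ_0\mid\cdot,\pi^*)\le Q^*(s,a,y)+\varepsilon_k+\delta$, and $\delta$ is arbitrary. For the lower bound, any $\pi\in\Pi_k$ extends (arbitrarily after stage $k$) to some $\bar\pi\in\Pi_H$ without altering $\mathcal{C}_{0,k-1}$, so $CVaR_y(\mathcal{C}_{0,k-1}+\gamma^kQ_0\mid\cdot,\pi)\ge CVaR_y(\mathcal{C}_{0,\infty}\mid\cdot,\bar\pi)-\varepsilon_k\ge Q^*(s,a,y)-\varepsilon_k$, and minimizing over $\Pi_k$ preserves the bound. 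Letting $k\to\infty$ yields $Q_k\to Q^*$ uniformly; comparing with $Q_k\to Q^f$ forces $Q^f=Q^*$, so $\mathbf{T}[Q^*]=Q^*$ and the fixed point is unique.

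The step I expect to be the main obstacle is the bookkeeping between the horizon-$k$ policy classes $\Pi_k$ and the history-dependent class $\Pi_H$: making precise that the (nested, conditional) CVaR objects in Lemma~\ref{lemma:optimal} indeed equal the genuine $CVaR_y$ of $\mathcal{C}_{0,k-1}+\gamma^kQ_0$, and that this quantity depends only on the first $k$ actions, so restricting and extending policies is harmless. A secondary point needing care is verifying that the $1$-Lipschitz estimate survives the conditioning on $h_k$ and the recursive structure of Proposition~\ref{CVaR_decomp}; invoking coherence (monotonicity plus translation invariance) of $CVaR_y$ rather than manipulating quantiles keeps this clean.
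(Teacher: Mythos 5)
Your proposal is correct and follows essentially the same route as the paper: the contraction property of Lemma~\ref{lemma: contr} (on the complete sup-norm space of bounded functions) gives a unique fixed point $Q^f$ with $Q_k \to Q^f$, and Lemma~\ref{lemma:optimal} is then used to identify $Q^f$ with $Q^*$ from Eqn.~\eqref{eq:Q*}. The only difference is one of rigor rather than strategy: the paper's proof passes the limit $k\to\infty$ directly inside the $\min_{\pi}$ and the $CVaR_y$ without further comment, whereas you justify that interchange explicitly through the $1$-Lipschitz (coherence) property of $CVaR_y$ from Proposition~\ref{DualRep}, the uniform error bound $\varepsilon_k = \gamma^k C_{max}/(1-\gamma) + \gamma^k \lVert Q_0\rVert_\infty$, and the restriction/extension bookkeeping between $\Pi_k$ and $\Pi_H$ --- a more careful treatment of a step the paper glosses over.
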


\begin{proof}
The proof follows from Lemma \ref{lemma: contr} and Lemma \ref{lemma:optimal}. Since the CVaR Bellman operator $\mathbf{T}[Q](s,a,y)$ has the contraction property, we have that $\lim_{k \rightarrow \infty} Q_k(s,a,y)$ converges to a unique fixed point $\mathbf{T}[Q^f](s,a,y)= Q^f(s,a,y)$. 
From Lemma \ref{lemma:optimal} and using the fact that  $\gamma < 1$, we can write
\begin{align*}
 & Q^f(s,a,y)\\
 &= \lim_{k \rightarrow \infty} Q_k(s,a,y) \\
 &= \min_{\pi \in \Pi_H} CVaR_y(\lim_{k \rightarrow \infty} \mathcal{C}_{0,k-1} +\gamma^k Q_0|x_0=s, a_0=a, \pi) \\
&= \min_{\pi \in \Pi_H} CVaR_y(\mathcal{C}_{0,\infty}|x_0=s, a_0=a, \pi) \\
&=Q^*(s,a,y).
\end{align*}
Hence the proof is complete.
\end{proof}

We note that the optimal value function $Q^*(s,a,y)$ can be achieved by a stationary (deterministic) Markov policy $\mu^*(s,y)$ where 
$\mu^*$ is a mapping from the current state $s$ and CvaR confidence level $y$ to an action $a$, i.e., $\mu^*: (\mathcal{S},\mathcal{Y}) \rightarrow \mathcal{A}$. 
We obtain the fixed point $Q^*(s,a,y)$ through repeated application of the CVaR Bellman operator, and then determine  
a greedy policy $\mu^*$ on $Q^*(s,a,y)$ such that $\mu^*(s,y)= \argmin_a Q^*(s,a,y)$ \cite{kar2013cal}. Then, from the definition of $Q^*(s,a,y)$ in Eqn. \eqref{eq:Q*}, it follows that the greedy policy $\mu^*$ indeed achieves optimality. 
This makes the problem of finding an optimal policy tractable, even though the original problem in (\ref{prob}) is defined over the set of history-dependent policies.

\section{CVaR-based Distributed Multi-Agent Reinforcement Learning}\label{sec:Results2}

We use the insight from Theorem \ref{Q_MDP} to guide the design of a distributed multi-agent reinforcement learning algorithm where agents seek to optimize a risk-aware objective. 
Such an objective is expressed in terms of a CVaR-based discounted accumulated cost (described in Eqn. \eqref{CVaRQFunc}). 
The update rule for our \emph{CVaR QD-Learning} algorithm is informed by the QD-learning scheme proposed in \cite{kar2013cal}. 
However, optimizing a CVaR-based value function (instead of an expectation-based function in \cite{kar2013cal}) will require us to develop additional mathematical structure, which will be described below. 

At each iteration of the \emph{CVaR QD-Learning} update, the augmented state-action value function of an agent evolves as a weighted sum of (i) the augmented state-action value function at the previous iteration, (ii) an \emph{innovation} term arising from the cost observed by taking a particular action in a given state, and (iii) a \emph{consensus} term corresponding to the difference between augmented state-action values of the agent with its neighbors $\mathcal{N}(\cdot)$ in the inter-agent communication graph $\mathcal{G}$. 
Specifically, the sequence $\{Q_{n,k}(s,a,y)\}$ evolves for each agent $n$ according to the following equation: 
\begin{align} \label{qd_l_update}
&Q_{n,k+1}(s_k,a_k,y_k)= (1-\alpha_k)  Q_{n,k}(s_k,a_k,y_k) \nonumber \\
&~~~+ \alpha_k \Big(c_n(s_k,a_k) + \gamma  \min_{a' \in \mathcal{A}} \big[ \max_{\xi \in \mathcal{U}_{CVaR}(y,P(\cdot|s,a))} \nonumber \\
&~~~~~~~~~~~~~~~~~~~~~~~~~~\xi(s_{k+1}) Q_{n,k}(s_{k+1},a',y_k  \xi(s_{k+1})) \big] \Big)  \nonumber \\
&-\beta_k \sum_{l \in \mathcal{N}(n)} \big(Q_{n,k}(s_k,a_k,y_k)- Q_{l,k}(s_k,a_k,y_k)\big),
\end{align}
where the weight sequences $\{\alpha_k\}$ and $\{\beta_k\}$ are given by \eqref{eq:alpha_orig} and \eqref{eq:beta_orig} where $\tau_1 \in (\frac{1}{2},1]$ and $\tau_2 \in (0,\tau_1-\frac{1}{2})$. Like \cite{kar2013cal}, here the weight sequences satisfy  $\sum_k \alpha_k = \infty$, $\sum_k \beta_k = \infty$, $\sum_k \alpha_k^2 < \infty$, $\sum_k \beta_k^2 < \infty$ and $\frac{\beta_k}{\alpha_k} \rightarrow \infty$ as $k \rightarrow \infty$. Constants $a$ and $b$ in Eqns. \eqref{eq:alpha_orig} and \eqref{eq:beta_orig} also satisfy that $(I_N- \beta_k L- \alpha_k I_N)$ is positive semidefinite for all $k$.

Since the costs $\{c_n(s,a)\}$ and the parameter $\{\xi(s)\}$ are bounded, and the initial augmented state-action value functions $\{Q_{n,0}(s,a,y)\}$ are chosen to be bounded for all agents $n$, and for all $s \in \mathcal{S}, a \in \mathcal{A}$ and $y \in \mathcal{Y}$, we can show that 
$\{Q_{n,k}(s_k,a_k,y_k)\}$ is pathwise bounded, i.e., $P(\sup_k ||Q_{n,k}||_{\infty} < \infty)=1$. 
Our next result establishes that the augmented state-action value functions of the agents asymptotically reach a consensus. 

\begin{theorem}[Consensus of CVaR QD-Learning] \label{CVaRQDConsensus}
 For the CVaR QD-learning update in Eqn. \eqref{qd_l_update}, each agent reaches consensus asymptotically for all $s \in \mathcal{S}, a \in \mathcal{A}$ and $y \in \mathcal{Y}$. That is,  
\begin{align} \label{qdl_eq1}
P(lim_{k \rightarrow \infty} ||Q_{n,k}(s,a,y)-\bar{Q}_k(s,a,y)||=0)=1, 
\end{align}
where 
\begin{align}
\bar{Q}_k(s,a,y)= \frac{1}{N} \sum_{n=1}^N Q_{n,k}(s,a,y).
 \end{align} 
\end{theorem}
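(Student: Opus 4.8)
The plan is to adapt the consensus argument of \cite{kar2013cal} to the augmented state-action value functions. First I would fix a triple $(s,a,y)$ and, writing $\mathbf{Q}_k := (Q_{1,k}(s,a,y),\dots,Q_{N,k}(s,a,y))^\top \in \mathbb{R}^N$ and $\mathbf{1}$ for the all-ones vector, note that along the (random) times at which this triple is updated the recursion \eqref{qd_l_update} reads $\mathbf{Q}_{k+1} = (I_N - \alpha_k I_N - \beta_k L)\mathbf{Q}_k + \alpha_k \mathbf{e}_k$, where $\mathbf{e}_k$ stacks the per-agent innovation terms $c_n(s_k,a_k) + \gamma\min_{a'}\big[\max_{\xi}\xi(s_{k+1})Q_{n,k}(s_{k+1},a',y_k\xi(s_{k+1}))\big]$, and that the consensus term contributes exactly $-\beta_k L\mathbf{Q}_k$ because $\sum_{l\in\mathcal{N}(n)}(Q_{n,k}-Q_{l,k}) = (L\mathbf{Q}_k)_n$. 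Since $L\mathbf{1}=0$ and $L$ is symmetric, the average $\bar{Q}_k(s,a,y) = \frac1N\mathbf{1}^\top\mathbf{Q}_k$ evolves with the consensus term removed, and the disagreement vector $\tilde{\mathbf{Q}}_k := \mathbf{Q}_k - \bar{Q}_k(s,a,y)\mathbf{1}$ (which lies in $\mathbf{1}^\perp$) satisfies $\tilde{\mathbf{Q}}_{k+1} = (I_N - \alpha_k I_N - \beta_k L)\tilde{\mathbf{Q}}_k + \alpha_k J_N\mathbf{e}_k$, with $J_N := I_N - \frac1N\mathbf{1}\mathbf{1}^\top$.

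Next I would use the spectral structure of $L$. Because $a,b$ are chosen so that $I_N - \alpha_k I_N - \beta_k L$ is positive semidefinite and because $\mathcal{G}$ is connected so that $\lambda_2(L) > 0$, the operator norm of $I_N - \alpha_k I_N - \beta_k L$ restricted to $\mathbf{1}^\perp$ is exactly $1 - \alpha_k - \beta_k\lambda_2(L)$, which is in $[0,1)$ and bounded above by $1 - \beta_k\lambda_2(L)$. Combining this with the pathwise boundedness $P(\sup_k\|Q_{n,k}\|_\infty < \infty) = 1$ noted after \eqref{qd_l_update} — which, together with boundedness of the costs $c_n$ and of $\xi$, bounds $\|J_N\mathbf{e}_k\|$ by a finite random variable $B(\omega)$ — and using that $I_N - \alpha_k I_N - \beta_k L$ leaves $\mathbf{1}^\perp$ invariant, I obtain, pathwise along the update times of $(s,a,y)$, the scalar inequality $\|\tilde{\mathbf{Q}}_{k+1}\| \le (1 - \beta_k\lambda_2(L))\|\tilde{\mathbf{Q}}_k\| + \alpha_k B(\omega)$.

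Finally I would close the argument with the deterministic recursion lemma: if $u_{k+1} \le (1 - r_k)u_k + v_k$ with $r_k\in[0,1]$, $\sum_k r_k = \infty$, and $v_k/r_k \to 0$, then $u_k \to 0$. Here $r_k = \beta_k\lambda_2(L)$, so $\sum_k r_k = \infty$ follows from $\sum_k\beta_k = \infty$, and $v_k/r_k = \alpha_k B(\omega)/(\beta_k\lambda_2(L)) \to 0$ follows from $\beta_k/\alpha_k \to \infty$; hence $\|\tilde{\mathbf{Q}}_k\| \to 0$ almost surely along the visit times of $(s,a,y)$. Since at a step that does not update $(s,a,y)$ the vector $\tilde{\mathbf{Q}}_k$ is unchanged, convergence along the (almost surely infinite) subsequence of visit times upgrades to convergence of the full sequence, which is exactly \eqref{qdl_eq1}; taking a countable union over the relevant triples preserves the probability-one statement.

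The step I expect to be the main obstacle is reconciling the asynchronous, randomized updating with the continuous risk parameter: one must argue that each relevant triple $(s,a,y)$ is visited infinitely often and that the step sizes restricted to its visit times $\{k_j\}$ still satisfy $\sum_j\beta_{k_j} = \infty$ and $\alpha_{k_j}/\beta_{k_j} \to 0$, which is where the exploration/persistent-excitation conditions and the Markov-chain sampling arguments of \cite{kar2013cal} enter, and where the discretization of $\mathcal{Y}$ discussed in Section \ref{CVaRAlgo} must be invoked. A secondary technical point is making the pathwise boundedness claim rigorous via a stochastic-approximation boundedness argument, and, if one prefers to keep the genuinely random terms explicit rather than absorbing them into $B(\omega)$, replacing the deterministic recursion lemma by a Robbins--Siegmund-type supermartingale convergence result.
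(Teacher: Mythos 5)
Your proposal is correct and takes essentially the same route as the paper's proof: stack the per-agent values for a fixed $(s,a,y)$, subtract the network average to get the disagreement recursion $\tilde{\mathbf{Q}}_{k+1} = (I_N-\alpha_k I_N-\beta_k L)\tilde{\mathbf{Q}}_k + \alpha_k J_N\mathbf{e}_k$, contract on $\mathbf{1}^\perp$, bound the innovation via pathwise boundedness, and conclude from $\sum_k\beta_k=\infty$ and $\beta_k/\alpha_k\to\infty$. The only difference is that where you prove the spectral contraction and the $u_{k+1}\le(1-r_k)u_k+v_k$ lemma explicitly, the paper cites Lemmas 4.1--4.2 of \cite{kar2013cal} (which additionally give the rate $(k+1)^\tau\tilde{\mathbf{Q}}_k\to 0$), and the asynchronous-visit subtlety you flag is likewise deferred to \cite{kar2013cal} rather than resolved in the paper itself.
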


\begin{proof}
For agent $n$ we can write the update in Eqn. \eqref{qd_l_update} as:
\begin{align} \label{qd_l_proof1}
z_{k+1}= (I_N- \beta_k L- \alpha_k I_N) z_k + \alpha_k(U_{k}+J_{k})
\end{align}
where $L$ is the Laplacian matrix of the graph $\mathcal{G}$ and
\begin{align*}
 z_k &:= [Q_{1,k}(s_k=s,a_k=a,y_k=a) \cdots \\
&~~~~~~~~~~~~~~~~~~~~~Q_{n,k}(s_k=s,a_k=a,y_k=y)]^T, \\
U_{k} &:= \gamma  \min_{a' \in \mathcal{A}} \big[ \max_{\xi \in \mathcal{U}_{CVaR}(y,P(\cdot|s,a))} \xi(s_{k+1}=s')\\ &~~~~~~~~~~~~~~~~~~~~~~~~~~~~Q_{n,k}(s_{k+1}=s',a', y_k  \xi(s_{k+1})) \big], \\
J_{k} &:= c_n(s_k=s,a_k=a).   
\end{align*} 
Let $\hat{z}_k= z_k - \frac{1}{N} 1_N^T z_k 1_N$ where $1_N$ is a vector in $\mathbb{R}^N$ with all entry as $1$. Then we can write Eqn. \eqref{qd_l_proof1} as
\begin{align} \label{qd_l_proof2}
\hat{z}_{k+1}= (I_N- \beta_k L- \alpha_k I_N) \hat{z}_k + \alpha_k(\hat{U}_{k}+\hat{J}_{k})
\end{align}
where $\hat{U}_{k}= U_k -\frac{1}{N} 1_N^T U_k 1_N$ and $\hat{J}_{k}= U_k -\frac{1}{N} 1_N^T J_k 1_N$. Then following the argument of \cite{kar2013cal} and applying Lemma 4.2 in \cite{kar2013cal}, we can write for $k \geq k_0$
\begin{align} \label{qd_l_proof3}
||(I_N- \beta_k L- \alpha_k I_N) \hat{z}_k|| \leq (1-c_2~r_k)||\hat{z}_k||
\end{align}
where $c_2 \in (0,1)$, $0 \leq {r_k} \leq 1$ and $k_0\in \mathbb{N}$. Combining Eqns. \eqref{qd_l_proof2} and \eqref{qd_l_proof3}, we can write for $k \geq k_0$
\begin{align} \label{qd_l_proof4}
||\hat{z}_{k+1}|| \leq (1-c_2~r_k)||\hat{z}_k|| +\alpha_k(||\hat{U}_k||+||\hat{J}_k||).
\end{align} 
Since $\{Q_{n,k}(s,a,y)\}$, $\{\xi(s)\}$ and $\{c_n(s,a)\}$ are bounded for all agents $n$ and for all $s \in \mathcal{S}, a \in \mathcal{A}$ and $y \in \mathcal{Y}$, we have that $\{||\hat{U}_k||\}$ and $\{||\hat{J}_k||\}$ are bounded. Using Lemma 4.1 of \cite{kar2013cal} we can conclude that $P((k+1)^\tau \hat{z}_k \rightarrow 0)=1$ as $k \rightarrow \infty$ for all $\tau \in (0,\tau_1-\tau_2-\frac{1}{2})$.  Therefore, we have $P(\hat{z}_k \rightarrow 0)=1 \Rightarrow P(z_k \rightarrow \frac{1}{N} 1_N^T z_k 1_N)=1 \Rightarrow P(Q_{n,k}(s,a,y) \rightarrow \bar{Q}_n(s,a,y))=1$ for all $n$ and all $s \in \mathcal{S}, a \in \mathcal{A}$ and $y \in \mathcal{Y}$ as $k \rightarrow \infty$. Hence we recover Eqn. \eqref{qdl_eq1}.
\end{proof}

In order to solve the maximization over $\xi$ in Eqn. \eqref{qd_l_update} effectively, we now establish that the CVaR QD-learning update preserves the concavity of $\{y~Q_{n,k}(s,a,y) \}$. 
We observe that this concern is unique to the CVaR-based update, and is not seen in the expectation-based QD-learning update in \cite{kar2013cal}. 
Our next result formalizes this insight. 


\begin{theorem} \label{thrm: concav}
Suppose $\{y~Q_{n,k}(s,a,y) \}$ is concave in $y$ for all agents  $n$ and for all $s \in \mathcal{S}, a \in \mathcal{A}$ and $y \in \mathcal{Y}$. Then, $\{y~Q_{n+1,k}(s,a,y)\}$ is also concave in $y$ for all agents $n$ and for all $s \in \mathcal{S}, a \in \mathcal{A}$ and $y \in \mathcal{Y}$.
\end{theorem}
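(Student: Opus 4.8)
The plan is to show that, for every agent $n$ and every fixed pair $(s,a)$, the map $y\mapsto y\,Q_{n,k+1}(s,a,y)$ is a sum of functions each concave in $y$; since sums of concave functions are concave, this gives the claim. Concretely, reading the update in Eqn.~\eqref{qd_l_update} as a functional update in which $y_k$ is the free variable $y\in\mathcal{Y}$ and multiplying through by $y>0$, and grouping the two terms that contain $Q_{n,k}(s,a,y)$ (the $(1-\alpha_k)$ term and the diagonal part $-\beta_k|\mathcal{N}(n)|$ of the consensus sum), I would write
\begin{align*}
y\,Q_{n,k+1}(s,a,y) &= \big(1-\alpha_k-\beta_k|\mathcal{N}(n)|\big)\,y\,Q_{n,k}(s,a,y) + \alpha_k\, c_n(s,a)\,y \\
&\quad + \beta_k\sum_{l\in\mathcal{N}(n)} y\,Q_{l,k}(s,a,y) + \alpha_k\gamma\, y\min_{a'\in\mathcal{A}}\Big[\max_{\xi}\xi(s')\,Q_{n,k}\big(s',a',y\xi(s')\big)\Big],
\end{align*}
where $s'=s_{k+1}$ and $\xi\in\mathcal{U}_{CVaR}(y,P(\cdot|s,a))$. (For pairs $(s,a)\neq(s_k,a_k)$ the update leaves $Q_{n,k+1}=Q_{n,k}$, so concavity there is immediate.)

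The first small step is to note that the scalar $1-\alpha_k-\beta_k|\mathcal{N}(n)|$ is nonnegative: it is precisely the $n$-th diagonal entry of $I_N-\beta_k L-\alpha_k I_N$, which is assumed positive semidefinite, so all its diagonal entries are $\ge 0$. Given this, the first term is a nonnegative multiple of $y\,Q_{n,k}(s,a,y)$, concave by hypothesis; the second term is linear in $y$; and the third term is a nonnegative combination ($\beta_k>0$) of the $y\,Q_{l,k}(s,a,y)$, each concave by hypothesis. So only the innovation (last) term needs real work.

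For the innovation term, since a pointwise minimum of concave functions is concave, it is enough to show that for each fixed $a'$ the map $y\mapsto y\max_{\xi}\xi(s')\,Q_{n,k}(s',a',y\xi(s'))$ is concave. I would change variables to $u=y\,\xi(s')$, so that $y\,\xi(s')\,Q_{n,k}(s',a',y\xi(s')) = g_{s',a'}(u)$ with $g_{s',a'}(u):=u\,Q_{n,k}(s',a',u)$ concave in $u$ by hypothesis, while the constraint on $\xi(s')$ turns into a constraint on $u$ that is convex with a linear dependence on $y$. The desired concavity then follows from the standard fact that $y\mapsto\max\{\,g_{s',a'}(u): (u,y)\text{ jointly feasible}\,\}$ is concave: given optimizers $u^1,u^2$ at $y^1,y^2$, the point $\lambda u^1+(1-\lambda)u^2$ is feasible at $\lambda y^1+(1-\lambda)y^2$, and concavity of $g_{s',a'}$ gives the inequality. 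Taking the minimum over $a'$ and summing the four contributions shows $y\,Q_{n,k+1}(s,a,y)$ is concave in $y$ for all $n$, $s$, $a$.

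The main obstacle is the innovation term: making precise how the feasible set of $\xi(s_{k+1})$ depends on $y$ in the sampled update (and hence justifying that the inner maximization is a concave function of $y$ via the partial-maximization argument). The coefficient-sign observation — that $1-\alpha_k-\beta_k|\mathcal{N}(n)|\ge 0$ by the positive semidefiniteness of $I_N-\beta_k L-\alpha_k I_N$ — is the short but indispensable step that lets the consensus term fit into the concavity budget instead of breaking it.
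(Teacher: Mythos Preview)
Your proposal is correct and follows essentially the same route as the paper. Both arguments regroup the update so that the coefficient on $y\,Q_{n,k}(s,a,y)$ is $1-\alpha_k-\beta_k|\mathcal{N}(n)|\ge 0$, treat the cost and neighbor terms as (nonnegatively weighted) concave pieces, and handle the innovation term by exhibiting a feasible perturbation at the convex combination; your substitution $u=y\,\xi(s')$ and partial-maximization step is exactly the paper's explicit construction $\xi=\big((1-\lambda)y_1\xi_1+\lambda y_2\xi_2\big)/y_\lambda\in\mathcal{U}_{CVaR}(y_\lambda,P)$ viewed through a change of variables, and your PSD-diagonal justification for the sign of $1-\alpha_k-\beta_k|\mathcal{N}(n)|$ is a detail the paper asserts without proof.
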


\begin{proof}
Let, $y_1,y_2 \in \mathcal{Y}$, $\lambda \in [0,1]$ and $y_\lambda=(1-\lambda)y_1+ \lambda y_2$. Then, we can write
\begin{align*}
&(1-\lambda)y_1 Q_{n,k+1}(s,a,y_1)+ \lambda y_2 Q_{n,k+1}(s,a,y_2) \\
&= (1-\alpha_k-|\mathcal{N}(n)|\beta_k)\Big((1-\lambda)y_1 Q_{n,k}(s,a,y_1)\\
&+\lambda y_2 Q_{n,k}(s,a,y_2)\Big) +\beta_k \sum_{l \in \mathcal{N}(n)} \big((1-\lambda )y_1 Q_{l,k}(s,a,y_1) \\
&+\lambda y_2 Q_{l,k}(s,a,y_2)\big) +\alpha_k \Big( ((1-\lambda)y_1+ \lambda y_2) c_n(s,a)+ \\
& \gamma \Big( \min_{a_1' \in \mathcal{A}} \big[ \max_{\xi_1 \in \mathcal{U}_{CVaR}(y,P(\cdot|s,a))} \xi_1(s') (1-\lambda)y_1 \\
&Q_{n,k}(s',a',y_1  \xi_2(s)) \big] + \min_{a_2' \in \mathcal{A}} \big[ \max_{\xi_2 \in \mathcal{U}_{CVaR}(y,P(\cdot|s,a))} \xi_2(s') y_2\\
&Q_{n,k}(s',a_2',y_2  \xi_2(s)) \big] \Big)\\
& \leq (1-\alpha_k-|\mathcal{N}(n)|\beta_k) y_\lambda Q_{n,k}(s,a,y_\lambda)+ \beta_k \sum_{l \in \mathcal{N}(n)} (y_\lambda \\
& Q_{l,k}(s,a,y_\lambda))+ \alpha_k \big( y_\lambda c_n(s,a)+ \gamma \big( \min_{a' \in \mathcal{A}} \big[ \\
&\max_{\substack{\xi_1 \in \mathcal{U}_{CVaR}(y_1,P(\cdot|s,a)) \\ \xi_2 \in \mathcal{U}_{CVaR}(y_2,P(\cdot|s,a))}} \xi_1(s') (1-\lambda)y_1 Q_{n,k}(s',a',y_1  \xi_2(s)) \\
&+\xi_2(s') y_2 Q_{n,k}(s',a_2',y_2  \xi_2(s)) \big] \big) \\
& \leq (1-\alpha_k-|\mathcal{N}(n)|\beta_k) y_\lambda Q_{n,k}(s,a,y_\lambda)+ \beta_k \sum_{l \in \mathcal{N}(n)} (y_\lambda \\
& Q_{l,k}(s,a,y_\lambda))+ \alpha_k \big( y_\lambda c_n(s,a)+ \gamma \big( \min_{a' \in \mathcal{A}} \\
&\big[ \max_{\substack{\xi_1 \in \mathcal{U}_{CVaR}(y_1,P(\cdot|s,a)) \\ \xi_2 \in \mathcal{U}_{CVaR}(y_2,P(\cdot|s,a))}} ((1-\lambda)y_1 \xi_1(s')+ \lambda y_2 \xi_2(s')) \\
& Q_{n,k}(s',a',(1-\lambda)y_1 \xi_1(s') + \lambda y_2 \xi_2(s'))\big] \big)
\end{align*}
Note that $(1-\alpha_k-|\mathcal{N}(n)|\beta_k) \geq 0$ and $\beta_k \geq 0$. In the above, we have used the concavity of $min$ operation and the concavity of positive linear combination of concave functions and our assumption that $\{y~Q_{n,k}(s,a,y) \}$ is concave in $y$ for all agent $n$ and for all $s \in \mathcal{S}, a \in \mathcal{A}$ and $y \in \mathcal{Y}$.

We define $\xi:= \frac{(1-\lambda)y_1 \xi_1(s')+ \lambda y_2 \xi_2(s')} {y_\lambda}$. Note that when $\xi_1 \in \mathcal{U}_{CVaR}(y_1,P(\cdot|s,a))$ and $\xi_2 \in \mathcal{U}_{CVaR}(y_2,P(\cdot|s,a))$ we can write $\xi \in [0, \frac{1}{y_\lambda}]$, where $\sum_{s'\in \mathcal{S}} \xi(s') P(s'|s,a)=1$. Thus, we can write
\begin{align*}
& (1-\lambda)y_1 Q_{n,k+1}(s,a,y_1)+ \lambda y_2 Q_{n,k+1}(s,a,y_2) \\
& \leq (1-\alpha_k-|\mathcal{N}(n)|\beta_k) y_\lambda Q_{n,k}(s,a,y_\lambda)+ \beta_k \sum_{l \in \mathcal{N}(n)} (y_\lambda \\
& Q_{l,k}(s,a,y_\lambda))+ \alpha_k \big( y_\lambda c_n(s,a)+ \gamma \big( \min_{a' \in \mathcal{A}} \\
&\big[ \max_{\xi \in \mathcal{U}_{CVaR}(y,P(\cdot|s,a))} y_\lambda \xi(s')  Q_{n,k}(s',a',y_\lambda \xi(s'))\big] \big) \\
&= y_\lambda Q_{n,k+1}(s,a,y_\lambda), 
\end{align*}
which completes the proof.
\end{proof}

\section{The CVaR QD-Learning Algorithm} \label{CVaRAlgo}

Theorems \ref{CVaRQDConsensus} and \ref{thrm: concav} are the critical components in the design of our \emph{CVaR QD-Learning} algorithm. 
This section presents our algorithm, and describes how some challenges in the implementation of the algorithm are overcome. 

First, the parameter $y$ in the Q-value $Q_{n,k}(s,a,y)$ takes values in the contiguous interval $\mathcal{Y} = (0,1]$. 
We overcome this challenge in the manner proposed in \cite{chow2015risk} by discretizing $\mathcal{Y}$ into sufficiently large number of intervals $m$, and considering only the extremities of each interval. 
Then, we can rewrite $\mathcal{Y} = [y_1, \cdots,  y_m]$, where $0 < y_1 < \cdots < y_m \leq 1$. 

The second challenge arises from the maximization of $\xi(s_{k+1}) Q_{n,k}(s_{k+1},a',y_k  \xi(s_{k+1}))$ over $\xi$ in the update in Eqn. \eqref{qd_l_update}. 
Our algorithm overcomes this challenge by solving a modified maximization problem, 
\begin{align*}
    \frac{1}{y_k} \max_{\xi(s_{k+1})} \big[(y_k\xi(s_{k+1})) Q_{n,k}(s_{k+1}, a',y_k  \xi(s_{k+1})) \big]. 
\end{align*}
The concavity property proved in Theorem \ref{thrm: concav} then allows us to conclude that any local maximum points of $\big[(y_k\xi(s_{k+1})) Q_{n,k}(s_{k+1}, a',y_k  \xi(s_{k+1})) \big]$ is indeed a global maximum. 

The final challenge is identifying an admissible value for ${\xi(s_{k+1})}$ during the maximization step since the transition probabilities $P(s_{k+1}|s_k,a_k)$ in reinforcement learning are unknown and revealed to agents only during interactions with the environment. 
Our implementation addresses this challenge by initially choosing ${\xi(s')}=1$ for all $s' \in \mathcal{S}$. 
Then, at every iteration of the \emph{CVaR QD-Learning} algorithm, we update an upper bound ${\bar{\xi}(s'|s,a)}$ in a manner such that ${\bar{\xi}(s'|s,a)} \hat{P}(s'|s,a) \leq 1$, where $\hat{P}(s'|s,a)$ is an estimate of $P(s'|s,a)$. 
We use a standard assumption in` Q-learning that a (stochastic) base policy $\pi_0$ is chosen such that every state-action pair is visited infinitely often~\cite{sutton2018reinforcement, watkins1992q} to compute an estimate $\hat{P}(s'|s,a)$ and thus obtain the bound ${\bar{\xi}(s'|s,a)}$. 
We use ${\bar{\xi}(s'|s,a)}= \frac{1}{\hat{P}(s'|s,a)}$ if $\hat{P}(s'|s,a) \neq 0$ (otherwise we keep the initial guess ${\bar{\xi}(s'|s,a)}=1$). For the maximization over $\xi$, we will use any $\xi(s_{k+1}) \leq \bar{\xi}(s_{k+1}|s_k,a_k)$ such that $\xi(s_{k+1})~y_k \in \mathcal{Y}$. 

The steps of our \emph{CVaR QD-Learning} algorithm is detailed in Algorithm \ref{algo:opt_policy}.

\begin{center}
  	\begin{algorithm}[!htp]
  		\caption{CVaR $QD$-Learning}
  		\label{algo:opt_policy}
  		\begin{algorithmic}[1]
  			\State \textbf{Input}: $y$, $\{\alpha_k\}$, $\{\beta_k\}$, $\gamma$, $\mathcal{S}$, $\mathcal{A}$, $\mathcal{Y}=[y_1, \cdots ,y_m]$, $N$.
                \State \textbf{Initialization}: $\{Q_{n,0}(s,a,y)\}$, $\bar{\xi}(s'|s,a)=1$, $k=0$.
  			\State \textbf{Output}: $\{Q_n(s,a,y)\}$.
  		    \Loop~{for each episode}
                 \State Initialize $s_0 \in \mathcal{S}$
                \Loop~{take action $a_k$ in $s_k$ 
                and observe next state $s_{k+1}$}.
                \State Update $\bar{\xi}(s_{k+1}|s_k,a_k)$.
                \State Each agent $n$ locally update $\{Q_{n,k+1}(s_k,a_k,y)\}$ according to \eqref{qd_l_update} for all $y \in \mathcal{Y}$ using any $\xi(s_{k+1}) \leq \bar{\xi}(s_{k+1}|s_k,a_k)$ such that $\xi(s_{k+1})~y_k \in \mathcal{Y}$.
                \State $k+1 \leftarrow k$.
                \EndLoop~{until $s$ is a terminal state}.
                \EndLoop
    	\end{algorithmic}
  \end{algorithm}
\end{center}

Since all agents asymptotically reach a consensus (Theorem \ref{CVaRQDConsensus}), a greedy policy over any agent's evaluation of the augmented action-value functions can be used to select the desired policy. Hence, the desired policy for the confidence level $y$ of CVaR can be selected as $\mu(s,y)= lim_{k \rightarrow \infty} \argmin_a Q_{n,k}(s,a,y)$ for any $n= 1, \cdots, N$.

\section{Experimental Evaluation}\label{sec:Experiments}

In this section, we carry out experiments to evaluate the performance of the \emph{CVaR QD-Learning} algorithm (Algorithm \ref{algo:opt_policy}). 
We first describe the experiment environment, and then report our results. 

\subsection{Environment}

Our setup consists of a network of $N=40$ agents. Each agent communicates with two of its nearest neighbors~\cite{kar2013cal}.  
State and action spaces are each binary-valued, thus giving $|\mathcal{S} \times \mathcal{U}|=4$, and $8$ different transition probabilities. 
Four of these transition probabilities are selected randomly via a uniform sampling from the interval $[0,1]$; this fixes the remaining four transition probabilities. 
For each agent $n$, the cost $c_n(s,a)$ is chosen from a uniform distribution that has a different mean for each state-action pair $(s,a)$. 
We set the discount factor $\gamma=0.7$, and parameters in Eqns. \eqref{eq:alpha_orig} and \eqref{eq:beta_orig} are chosen to be $\tau_1=0.2$, $\tau_2=0.3$, $a=0.2$ and $b=0.1$. 
The interval $(0,1]$ is discretized into $100$ equally spaced intervals to quantify confidence levels associated with CVaR. 
Thus, we have $\mathcal{Y}=\{0.01, 0.02, \cdots,0.99, 1\}$. 
We evaluate Algorithm \ref{algo:opt_policy} by instantiating a single trajectory $\{s_t,a_t\}$. At each step $t$ in state $s_t$, actions $a_t$ is chosen randomly via uniform sampling; the next state $s_{t+1}$ is determined by the transition probabilities. 
The initial state $s_0$ is chosen randomly. 
We also set the initial estimates of $Q-$values for the agents to different values.

\subsection{Evaluation Results}

\begin{figure}
    \centering
    \includegraphics[scale = 0.16]{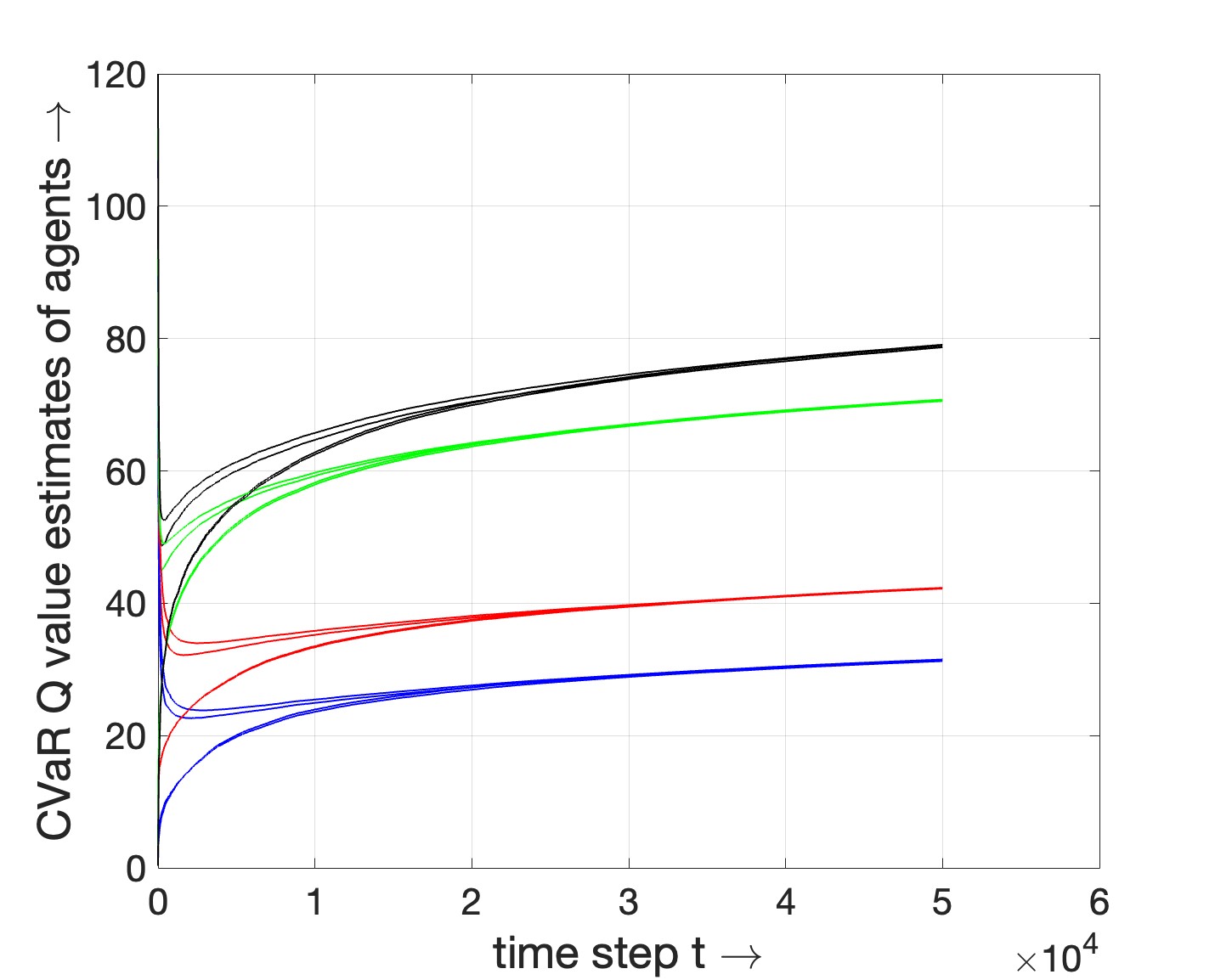}
    \caption{This figure presents the evolution of $CVaR$ $Q-$value estimates for four different non-neighbor agents with CVaR confidence level $y=0.7$. Each state-action pair $(s,a)$ is represented by a different color. We observe that when following Algorithm \ref{algo:opt_policy}, $CVaR$ $Q-$values of the agents reaches consensus for all $(s,a)$ pairs. 
    }
    \label{fig:fig1}
\end{figure}

We show the evolution of the $CVaR$ $Q-$value estimates when following our \emph{CVaR QD-Learning} algorithm (Algorithm \ref{algo:opt_policy}) for four different non-neighbor agents in Fig. \ref{fig:fig1}, where each state-action pair $(s,a)$ is represented by a different color. 
The CVaR confidence level is set to $y=0.7$ for all state-action pairs. 
We observe that agents asymptotically reach consensus on estimates of their $CVaR$ $Q-$values for all $(s,a)$ pairs. 
The rate of convergence is proportional to the number of times the state-action pair is visited in the trajectory (since a larger number of samples will be available in order to update the corresponding $CVaR$ $Q-$value).

\begin{figure}
    \centering
    \includegraphics[scale = 0.16]{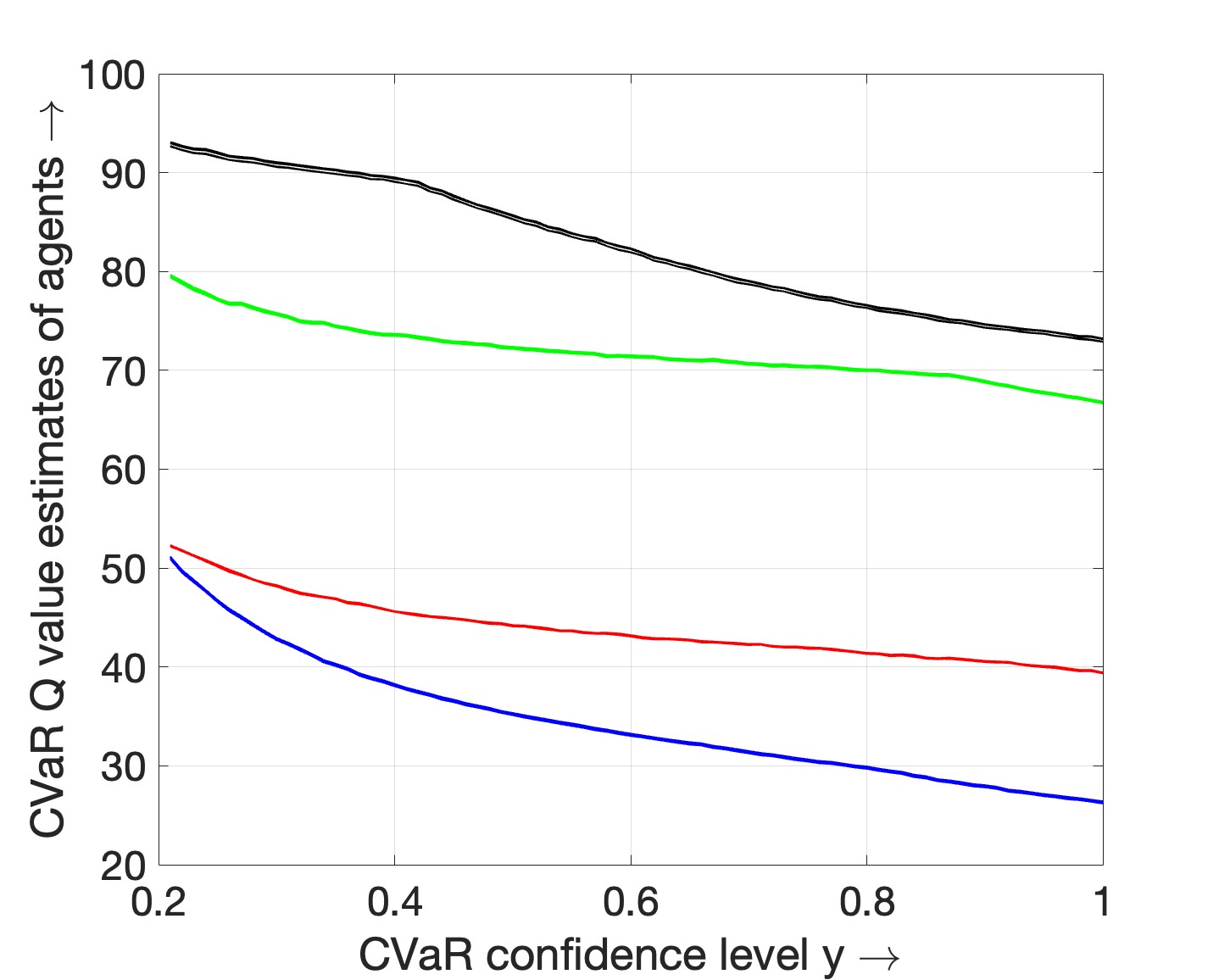}
    \caption{This figure presents the change in $CVaR$ $Q-$value estimates at time-step $t=50000$ for four non-neighbor agents when the CVaR confidence level $y \in [0.2,1]$. Each color denotes the $CVaR$ $Q-$value for one $(s,a)$ pair. We observe that agents' risk sensitivity varies from risk-aware ($y<1$) to risk-neutral ($y=1$), the $CVaR$ $Q-$value decreases. Intuitively, this shows that risk-aware behaviors will result in higher ($CVaR$) $Q-$values when the objective is to minimize an accumulated cost. 
    }
    \label{fig:fig2}
\end{figure}

Fig. \ref{fig:fig2} demonstrates the variation of $CVaR$ $Q-$value estimates at time-step $t=50000$ for different values of the confidence level $y \in [0.2,1]$. 
As agents reach consensus, we observe that for all state-action pairs, the $CVaR$ $Q-$value decreases with increase in $y$. 
The $CVaR$ $Q-$value is lowest when $y=1$, i.e., the situation identical to the more conventional expectation-based $Q-$value~\cite{sutton2018reinforcement, kar2013cal}. 
Intuitively, this result indicates that as agents' risk sensitivity varies from risk-neutral ($y=1$) to risk-aware ($y < 1$), their respective ($CVaR$) $Q-$values will be higher (since their objective is to minimize an accumulated cost). 

\section{Conclusion}\label{sec:Conclusion}

In this paper, we proposed a distributed multi-agent reinforcement learning (MARL) framework for decision-making by learning risk-aware policies. 
We used the conditional value-at-risk (CVaR) to characterize a risk-sensitive cost function, and introduced a Bellman operator to describe a CVaR-based state-action value function. 
Theoretically, we proved that this operator was a contraction, and that it converged to the optimal value. 
We used this insight to develop a distributed MARL algorithm called the \emph{CVaR QD-Learning} algorithm, and proved that risk-aware value functions associated to each agent reached consensus. 
We presented solutions to multiple challenges that arose during the implementation of the \emph{CVaR QD-Learning} algorithm, and evaluated its performance through experiments through simulations. We also demonstrated the effect of a risk parameter on value functions of agents when they reach consensus. 

One possible extension of our approach is to investigate the setting when some agents may be malicious or corrupt. 
Some preliminary work in this direction has been studied, albeit while minimizing an average cost criterion~\cite{xie2021towards}. 
Another interesting problem is to examine the continuous state-action setup, where policies are parameterized by (deep) neural networks. 
Initial research in the design of risk-sensitive policies for MARL have focused on the centralized training regime~\cite{qiu2021rmix}. 
Examining the development of resilient solutions in the context of synthesizing distributed risk-aware policies and developing distributed algorithms to characterize risk-aware behaviors in continuous state-action spaces is a promising research direction. 
%
\bibliographystyle{IEEEtran}
\bibliography{CDC23-RiskMARL-References}

\end{document}